\newcommand{\cmark}{\ding{51}}%
\newcommand{\xmark}{\ding{55}}%
\algrenewcommand\algorithmicindent{0.5em}%
\declaretheorem{theorem}
\pgfplotsset{compat=newest} %
\setlist{nosep} %
\pgfplotsset{
    every axis x label/.append style={
        alias=current axis xlabel
    },
    legend pos/outer south/.style={
        /pgfplots/legend style={
            at={%
                (%
                \@ifundefined{pgf@sh@ns@current axis xlabel}%
                {xticklabel cs:0.5}%
                {current axis xlabel.south}%
                )%
            },
            anchor=north
        }
    }
}
\algrenewcommand\algorithmicindent{0.75em}%
\newcommand{\bigO}{\ensuremath\mathcal{O}}
\renewcommand{\bigO}{\ensuremath O}
\title{Reproducibility in Multiple Instance Learning:\\A Case For Algorithmic Unit Tests }
\author{
Edward Raff \\
Booz Allen Hamilton\\
University of Maryland, Baltimore County \\
\texttt{raff\_edward@bah.com}
\And
Jim Holt \\
Laboratory for Physical Sciences \\ 
\texttt{holt@lps.umd.edu}
}
\begin{document}

\maketitle

\begin{abstract}
Multiple Instance Learning (MIL) is a sub-domain of classification problems with positive and negative labels and a ``bag'' of inputs, where the label is positive if and only if a positive element is contained within the bag, and otherwise is negative. Training in this context requires associating the bag-wide label to instance-level information, and implicitly contains a causal assumption and asymmetry to the task (i.e., you can't swap the labels without changing the semantics). MIL problems occur in healthcare (one malignant cell indicates cancer), cyber security (one malicious executable makes an infected computer), and many other tasks. In this work, we examine five of the most prominent deep-MIL models and find that none of them respects the standard MIL assumption. They are able to learn anti-correlated instances, i.e., defaulting to ``positive'' labels until seeing a negative counter-example, which should not be possible for a correct MIL model. We suspect that enhancements and other works derived from these models will share the same issue. In any context in which these models are being used, this creates the potential for learning incorrect models, which creates risk of operational failure.  We identify and demonstrate this problem via a proposed ``algorithmic unit test'', where we create synthetic datasets that can be solved by a MIL respecting model, and which clearly reveal learning that violates MIL assumptions. The five evaluated methods each fail one or more of these tests. This provides a model-agnostic way to identify violations of modeling assumptions, which we hope will be useful for future development and evaluation of MIL models. 
\end{abstract}

\section{Introduction}

In Multiple Instance Learning (MIL) we have a dataset of $N$ labeled points, which we will represent as $\mathcal{X}$ with associated labels $y \in \{-1, 1\}$ for the negative and positive labels respectively. As originally described, the MIL problem involves each datum $X_i \in \mathcal{X}$ being a \textit{bag} of multiple \textit{instances}, where $X_i = \{\mathbf{x}_1, \mathbf{x}_2, \ldots, \mathbf{x}_{n_i}\}$ is a bag of $n_i$ instances. Each instance $\mathbf{x}_j \in X_i$ is a $D$-dimensional vector, and every bag $X_i$ may have a different total number of items $n_i$. Given instant level classifier $h(\cdot)$, most MIL algorithms work by predicting $\hat{y}_i = \max_{\forall x_j \in X_i} h(x_j)$. 

As originally described, the positive/negative label of each bag $X_i$ has a special meaning. By default, a bag's label is negative ($y=-1$). The label of a bag will become positive ($y=1$) if and only if a positive instance $\mathbf{x_j}$ is present inside the bag, at which point the entire bag's label becomes positive. Because instance-level labels mapping each $\mathbf{x_j} \rightarrow y \in \{-1, 1\}$ are not given, the MIL problem is to infer the instance-level labels from the whole-bag level labeling. This implies a critical asymmetric nature to the given labels and how they must be handled. A value of $y=-1$ tells us that all instances are negative in the given bag, whereas a label of $y=1$ tells us that one or more instances have a positive label. For this reason, swapping the positive and negative labels in a MIL problem is not semantically meaningful or correct, whereas, in a standard classification problem, the labels can be interchanged without altering the semantics of the learning task. 

The MIL problem occurs with frequency in many real-world applications, in particular in the medical community where the presence of any abnormal cell type (i.e., instance) is the confirmative indicator for a larger organism's disease (i.e, bag and label). As the MIL problem implies, and the medical example makes explicit, the MIL model has an implicit casual assumption: the right combination of positive indicators dictate the output label, and so the MIL model is both a valuable inductive bias toward the solution and a guard against physically implausible solutions.

Algorithms that fail, or intentionally forgo, the MIL constraints may appear to obtain better accuracy "in situ" (i.e., the lab environment). But if it is known that the MIL assumption is true, ignoring it creates a significant risk of failure to generalize "in vivo" (i.e., in real production environments). In the clinical context, this is important as many ML algorithms are often proposed with superior in situ performance relative to physicians \cite{Poore2020}, but fail to maintain that performance when applied to new clinical populations \cite{Gihawi2023,Varoquaux2022,Wynants2020}. In this case, respecting underlying MIL properties eliminates one major axis of bias between situ and vivo settings and higher confidence in potential utility. In the cyber security space, respecting the MIL nature eliminates a class of "good word" style attacks \cite{fleshman2019nonnegative,10.1145/3180445.3180449,LowdM05} where inconsequential content is added to evade detection, an attack that has worked on production anti-virus software \cite{Stiborek_2018,pevny2020nested,JMLR:v9:jorgensen08a}. These reasons are precisely why MIL has become increasingly popular, and the importance of ensuring the constraints are satisfied. 

Notably, this creates a dearth of options when more complex MIL hypotheses are required, as CausalMIL and mi-Net succeed by restricting themselves to the Standard MIL assumption. The creation of MIL models that satisfy this, and other more complex hypotheses, are thus an open line of research that would have potentially significant clinical relevance. Similarly, users with more niche MIL needs may desire to more thoroughly test their models respect the constraints critical to their deployment. Our work has demonstrated that many articles have not properly vetted the more basic MIL setting, and so we suspect other more complex MIL problems are equally at risk.

Our work contributes the identification of this issue, as well as a strategy to avoid repeat occurrences, by developing \textit{algorithmic unit tests} where a synthetic dataset is created that captures specific properties about the desired solution. The test will fail if an invariant of the algorithm's solution is not maintained, as summarized in \autoref{tbl:summary}. Such failures indicate that a MIL method is not properly constrained, and the learning goal is not being achieved. We construct three such datasets for the MIL problem, which can be reused by any subsequent MIL research to mitigate this problem. Based on these results, we would suggest practitioners/researchers begin with CausalMIL and mi-Net as a solid foundation to ensure they are actually satisfying the MIL hypothesis, and thus avoiding excess risk in deployment.

\begin{table}[!h]
\caption{Our work proposes a test of the Standard Multiple Instance Learning (MIL) hypothesis and two tests for the Threshold MIL. Most modern deep learning MIL models do not test or prove that they respect any MIL hypothesis, and our tests show that most are insufficient. When a model passes the Threshold test, but fails the Standard test, it is still not a valid Threshold MIL model because the tests do not guarantee correctness, they are like software unit tests that identify failures.  } \label{tbl:summary}
\resizebox{\columnwidth}{!}{%
\begin{tabular}{@{}lccccccc@{}}
\toprule
Model           & mi-Net   & MI-Net   & MIL-Pooling & Tran-MIL  & GCN-MIL   & CausalMIL & Hopfield  \\
Claim          & Standard & Standard & Standard    & Threshold & Threshold & Standard  & Threshold \\ \midrule
Standard Test   & \cmark     &\xmark       &\xmark          &\xmark        &\xmark        & \cmark      &\xmark        \\
Threshold Tests &\xmark       & \cmark     &\xmark          &\xmark        &\xmark        & \cmark      & \cmark    \\ \bottomrule
\end{tabular}%
}
\end{table}

This paper is organized as follows. In \autoref{sec:related_work} we will review broadly related works, including prior work in non-deep-MIL and deep-MIL literature. It is in this related work we will denote the baseline algorithms we test, in particular the five deep-MIL models that form the foundation of most current deep-MIL research, and a sixth deep-MIL method that is little-known but does pass our tests. Next in \autoref{sec:unit_tests} we will define three algorithmic unit tests for MIL models. The first tests the fundamental MIL assumption that all models must respect, and the second and third tests extend to a generalized version of the MIL problem known as ``threshold'' MIL. Prior deep-MIL works might tacitly assume they can tackle the generalized MIL, but make no formal specification of the types of MIL models they tackle.  Then we apply our tests to six deep-MIL and seven older Support Vector Machine based MIL models in \autoref{sec:results}, demonstrating how different algorithms pass and fail different unit tests. In doing so we provide hard evidence that the foundations of most current deep-MIL works are invalid, and thus dangerous to use in any case where the MIL assumption is used for casual or clinically relevant constraints. For example, although a cancer diagnosis should occur only because cancer was detected, a non-MIL model could learn the absence of something unrelated as a false signal, causing it to overfit. In addition, we discuss cases where a known non-MIL algorithm still passes the unit test, prompting a discussion on how unit tests should be used for invalidation, not certification. Finally, we conclude in \autoref{sec:conclusion}. 

\section{Related Work} \label{sec:related_work}

Concerns about reproducibility within the fields of machine and deep learning have increased in recent years. Prior works have studied reproducibility issues with respect to dataset labels/integrity ~\cite{beyer2020imagenet,Barz_2020,Geiger_2021}, comparison methodology, and making conclusions on improvement~\cite{NEURIPS2021_17fafe5f,MLSYS2021_0184b0cd,dror-etal-2017-replicability,JMLR:v17:benavoli16a,JMLR:v7:demsar06a},  discrepancies between math and floating-point precision~\cite{NEURIPS2022_7274ed90}, discrepancies between code and paper, ~\cite{NEURIPS2019_c429429b}, and false conclusions in repeatability ~\cite{pmlr-v97-bouthillier19a}. We note that none of these prior efforts on reproducibility would have identified the MIL assumption violation that we identify in this work. Our situation is a different aspect of reproducibility in that the methods under test can be reproduced/replicated, but the methods themselves fundamentally are not designed to enforce the modeling assumptions, and no testing was done to ensure that they do. By developing tests meant to elicit certain and specific behaviors of a MIL model, we show how unit tests may be developed for an algorithm in the form of synthetic datasets. 

Within the reproducibility literature, we believe the work by ~\citet{NEURIPS2022_7274ed90} is the most similar to ours, where they develop a mathematical framework for characterizing the reproducibility of an optimization procedure when initialization and gradient computations are not exact. This is motivated by the fact that proofs of optimization do not account for floating point precision in the majority of cases, so specialized domain tests can be useful. A key point is that having source code is helpful, but does not confer correctness of the property of interest~\cite{10.1145/3589806.3600042}. In contrast, our work is more empirical in that we actually implement our proposed tests, and our tests are born not from a mismatch between math and implementation but from the observation that prior works have neglected the mathematical work to ensure their methods follow the MIL assumptions. Other relevant works in reproducibility have looked at methodological errors~\cite{MLSYS2021_0184b0cd,10.1145/3298689.3347058,Lu2023}.

The MIL problem bears resemblance to a niche set of defenses used within the malware detection literature. To defend against adversarial attacks, ``non-negative'' \cite{fleshman2019nonnegative} or ``monotonic'' \cite{10.1145/3180445.3180449} models were developed where features can only be positive (read, malicious) indicators, and by default, all files would be marked negative (benign) absent any features. This is similar to the MIL model assumption that there is no positive response unless a specific instance is present, and indeed, MIL approaches have been used to build malware detectors that are interpretable and not susceptible to attack~\cite{Stiborek_2018}. 

\subsection{Relevant Multi-Instance Learning Work}

An explosion of interest in MIL literature has occurred due to its relevance in medical imaging and other tasks where the MIL assumption aligns with clinically relevant or important physical/causal constraints of the underlying system. Shockingly, we find much of the literature does not test or ultimately respect this core MIL assumption, resulting in models that are at risk of over-fitting their training data and learning clinically/physically invalid solutions. Simulated benchmarks are common in MIL literature, but focus primarily on the Standard formulation and accuracy~\cite{Carbonneau2018}. \cite{grahn2021milbenchmarks} built synthetic benchmarks of MIL tasks, but did not formalize what kinds of MIL tasks or attempt to check if a model was violating the underlying generative MIL hypothesis\footnote{Two tasks are Standard MIL, one is Threshold MIL, and a fourth is indeterminate but closest to the Generalized MIL of \cite{Foulds_2010}}. The key difference of our work is to create synthetic datasets to test that a model respects the MIL assumptions, rather than benchmark accuracy. 
We will first review the older, predominantly Support Vector Machine (SVM) history of MIL models that we will test in this work. Then we consider the more recent deep learning counterparts. 

\subsubsection{Historical Non-Deep MIL}
Issues with under-specification of the MIL problem had been previously identified in the seminal survey of \citet{Foulds_2010}, who synthesized many implicit MIL extensions into a set of generalized and well-specified MIL types. As noted by this prior work, \textit{many MIL papers from this time period do not provide proof or attempt to enforce the MIL assumption}.  Thus while they may be exploring a broader scope of the MIL hypothesis space, the developed solutions may still fundamentally not satisfy the definition of any MIL model. 

We will briefly review some significant non-deep MIL models that we include as comparison points, and their status with respect to the MIL assumption. Most notably the mi-SVM and MI-SVM algorithms \cite{NIPS2002_3e6260b8} are correct by construction to the standard MIL model that we will discuss further in \autoref{sec:stnd_mil_test}. The MI-SVM in particular introduces the idea of a ``witness'', where the bag label is inferred from a singular maximum-responding instance, thus incorporating the standard MIL assumption.  SIL is intentionally MIL violating by construction \cite{Ray_2005}. NSK and STK algorithms \cite{10.1145/1273496.1273643} were previously recognized to not abide by the MIL hypothesis \cite{Foulds_2010}, even though the paper includes formal proofs on the learning theory, the MIL constraints were neglected. Not analyzed previously, we also include two additional models. Firstly, the MissSVM \cite{Zhou_2007}, which uses a semi-supervised SVM approach that uses the ``single witness'' approach to guarantee the standard MIL model. Secondly, the MICA model \cite{Mangasarian_2007}, which is invalid under the standard MIL model because it uses a convex combination of points in the positive bag, and thus does not preclude the possibility of a negative sample. 

\subsubsection{Deep MIL Models}

 The first MIL neural network by \citet{zhou2002neural} was later re-invented as the ``\texttt{mi-Net}''\footnote{Notably this was a mischaracterization and should have been named ``MI-Net'' going by the original naming scheme, but the names mi-Net and MI-Net with incorrect designation have stuck, and so we repeat them.} model, and directly translates the ``witness'' strategy~\cite{NIPS2002_3e6260b8} to a neural network, using weight sharing to process each bag independently, produce a maximal score, and then takes the max over those scores to reach a final decision. This re-invention was done by \citet{Wang_2018} who added ``\texttt{MI-Net}'' as a ``better'' alternative by concatenating the results across bags, allowing a final fully-connected layer to make the prediction by looking at all instances \textit{without any constraints}. This error allows the MI-Net to learn to use the absence of an instance as a positive indicator, thus violating the MIL assumption. This is true of the \texttt{MIL pooling} layer by \citet{pmlr-v80-ilse18a} (which forms the basis of their Attention MIL), the Graph Neural Network based \texttt{GNN-MIL} of \cite{tu2019multiple}, the Transformer based \texttt{TransMIL} \cite{NEURIPS2021_10c272d0}, and the \texttt{Hopfield} MIL model of \cite{NEURIPS2020_da4902cb}. These latter five deep-MIL models have formed the foundation of many extensions that have the same fundamental designs/prediction mechanisms, with various tweaks to improve training speed or handle large medical images \cite{Pal_Valkanas_Regol_Coates_2022,pmlr-v95-yan18a,Lin_2020,Shi_2020,Thandiackal_2022}. For this reason, we will test these five deep-MIL models as exemplars of the broader deep-MIL ecosystem, and show that all five models fail a simple test. 
 
 Two additional deep tests are included, which we note as distinct (because they respect MIL but are rarely used) from the preceding five highly popular methods. The mi-Net, which is the older and not widely used model of \cite{zhou2002neural} respects the standard MIL assumptions. Second is CausalMIL ~\cite{NEURIPS2022_e261e92e,Zhang_Liu_Li_2020}, the only recent line of MIL research of which we are aware that properly considers the standard MIL assumption, producing an enhanced version of the ``witness'' strategy. It does so by representing the problem as a graphical model to infer per-instance labels. While \citet{NEURIPS2022_e261e92e} note the causal nature of MIL modeling to inform their design, they did not document that other deep-MIL approaches fail to respect the MIL assumptions. 

\section{MIL Unit Tests} \label{sec:unit_tests}

 The prior works in deep-MIL research have all cited the seminal \citet{Dietterich_1997} for the MIL problem without elaborating further on the assumptions of the MIL model. As denoted by \citet{Foulds_2010}, there are many different generalizations of the MIL hypothesis to more complex hypothesis spaces, all of which require respecting that it is the presence of some item(s) that induce a positive label. We will focus on Weidmann’s Concept Hierarchy \cite{Weidmann_2003} that includes \citet{Dietterich_1997} as the most basic MIL hypothesis space, and test it along with a generalization of the MIL problem. 
 We note that an algorithm passing a test is not a certificate of correctness. Thus, if an algorithm passes the generalized Weidmann MIL tests (specified below), but fails the basic Dietterich test (specified below), it means the model fails all possible MIL models because it has failed the most foundational MIL test. Our code for these tests can be found at \url{github.com/NeuromorphicComputationResearchProgram/AlgorithmicUnitTestsMIL}.
 
 We will now formalize the general MIL problem in a notation that can capture both the standard and Weidmann versions of the MIL problem. We leverage this formalization to make it clear what properties our unit tests are attempting to capture, and to discuss how a non-MIL model learns invalid solutions. 

 For all of the tests we consider, let $h(\mathbf{x})$ be a function that maps a instance vector $\mathbf{x}$ to one of $K$ concept-classes $\in \mathcal{C} = \{\varnothing, 1, 2, \ldots, K\}$ (i.e., $h(\mathbf{x}) \in \mathcal{C}$), which includes the null-class $\varnothing$. This null class has the role of identifying ``other'' items that are unrelated to the positive output decision of the MIL problem. The null-class is the fundamental informative prior and useful constraint of the MIL problem space, where any item belonging to $\varnothing$ does not contribute to a negative class label prediction. That is to say, \textit{only the occurrence of the concept classes $c_1, \ldots, c_K$ can be used to indicate a positive label in a valid MIL model} \cite{Dietterich_1997,Foulds_2010}. 
 
 For all $k \in [1, \ldots, K]$ where $c_k \in \mathbb{Z}_{\geq 0}$ let $g(\{c_1, c_2, \ldots, c_K\})$ be a function that takes in the set of the number of times concept $c_k$ occurred in a bag, and outputs a class label $y \in \{-1, 1\}$ for a negative or positive bag respectively.  Given a MIL bag $X = \{\mathbf{x}_1, \ldots, \mathbf{x}_n \}$, let $\mathbbm{1}[\text{predicate}]$ be the indicator function that returns $1$ if and only if the $\text{predicate}$ is true. Then we can express the generalized MIL decision hypothesis space by \autoref{eq:gen_mil}.  

 \begin{equation} \label{eq:gen_mil}
     g\left( \bigcup_{k=1}^K \left\{ \sum_{\forall \mathbf{x'} \in X} \mathbbm{1}\left[h(\mathbf{x'}) = k \right] \right\}  \right)
 \end{equation}

 This generalized form can cover multiple different versions of the MIL problem by changing the constraints on the size of the concept class $\mathcal{C}$ and the decision function $g(\cdot)$. 

In the remaining sub-sections, we will use this framework to specify the MIL model being tested, how the test works, and how an invalid MIL-model can ``solve'' the problem by violating the constraints. 
This is done by specifying constraints on $\mathcal{C}$ and $g(\cdot)$ that define the class of MIL models, and a unit test that checks that these constraints are being respected by the algorithm. We will do so by specifying a \texttt{NegativeSample} and \texttt{PositiveSample} function that returns bags $X$ that should have negative and positive labels respectively. Each function will have an argument called \texttt{Training}, as a boolean variable indicating if the bag is meant to be used at training or testing time. This is because we will alter the training and testing distributions in a manner that should be invariant to a valid MIL model, but have a detectable impact on non-MIL models. For this reason, we will refer to data obtained when \texttt{Training=True} as the \textit{training distribution} and \texttt{Training=False} as the \textit{testing distribution}. 

In each unit test, our training bags will have a signal that is easy to learn but violates the MIL assumption being tested. There will be a second signal corresponding to the true MIL decision process, that is intentionally (mildly) harder to detect. At test time (i.e., $\neg$\texttt{Training}), the easy-but-incorrect signal will be altered in a way that does not interfere with the true MIL classification rule.

If a model receives a training distribution AUC $> 0.5$, but a testing distribution AUC of $< 0.5$, then the model is considered to have failed a test. This is because a normally degenerate model should receive an AUC of 0.5, indicating random-guessing performance. To obtain an AUC $< 0.5$ means the model has learned a function anti-correlated with the target function. If this occurs simultaneously with an AUC of $> 0.5$, it means the model has learned the invalid non-MIL bait concept, which is designed to be anti-correlated in the testing distribution. 

To simplify the reading of each algorithmic unit test, we will use $\sim \mathcal{N}(a, I_d \cdot b)$ to indicate a vector is sampled from the multivariate normal distribution with $d$ dimensions that has a mean of $\mathbf{\mu} = \vec{1} \cdot a$ and a covariance $\Sigma = I_d \cdot b$. In all cases, we use $d=16$ dimensions, but the test is valid for any dimensionality. In many of our tests, the number of items will be varied, and we denote an integer $z$ sampled from the range $[a,b]$ as $z \sim \mathcal{U}(a,b)$ when an integer is randomly sampled from a range. When this value is not critical to the function of our tests, the sampling range will be noted as a comment in the pseudo-code provided. 

\subsection{Presence MI Assumption and Test} \label{sec:stnd_mil_test}

We begin with the simplest form of the MIL decision model as expressed by \citet{Dietterich_1997}. In this case, the concept class is unitary with $K=1$, $\mathcal{C} = \{\varnothing, 1\}$, giving the positive classes as the only option, and the non-contributing null-class $\varnothing$. The decision function $g(\{c_1\}) = c_1 \geq 1$, that is, the label is positive if and only if the positive concept $c_1$ has occurred at least once within the bag. 

Given these constraints, we design a simple dataset test to check that an algorithm respects these learning constraints on the solution.  We will abuse notation with $h(\mathcal{N}(0,I_d \cdot1)) \coloneqq \varnothing$ to indicate that the space of samples from a normal distribution as specified is defined as corresponding to the null-class $\varnothing$. This first value will be the general ``background class'' that is not supposed to indicate anything of importance. 

\begin{wrapfigure}[24]{r}{0.45\textwidth}
\vspace{-20pt}
\begin{minipage}{0.45\textwidth}
 \begin{algorithm}[H]%
    \caption{Single-Concept Standard-MIL}
    \label{alg:false_negative}
    \begin{algorithmic}[1]
        \Function{NegativeSample}{Training}
            \If{Training} \Comment{Poisoning}
                \State Add $p \sim \mathcal{N}(-10, I_d \cdot 0.1)$ to bag $X$ %
            \EndIf
            \For{$b$ iterations}
                \State Add $\mathbf{x} \sim \mathcal{N}(0,I_d \cdot 1)$ to bag $X$
           \EndFor
           \State return $X, y=-1$
        \EndFunction
        \Statex
        \Function{PositiveSample}{Training}
            \If{$\neg$Training} \Comment{Poisoning}
                \State Add $p \sim \mathcal{N}(-10, I_d \cdot 0.1)$ to bag $X$ %
            \EndIf
            \For{$k$ iterations}\Comment{$k \sim \mathcal{U}(1, 4)$}
                \State $\mathit{c} \gets $ coin flip
                \If{$\mathit{c}$ is True}
                    \State $\mathbf{x} \sim \mathcal{N}(0,I_d \cdot 3)$ 
                \Else
                    \State $\mathbf{x} \sim \mathcal{N}(1,I_d \cdot 1)$
                \EndIf
                \State Add $\mathbf{x}$ to bag $X$
           \EndFor
            \For{$b$ iterations}
                \State Add $\mathbf{x} \sim \mathcal{N}(0,I_d \cdot 1)$ to bag $X$
           \EndFor
           \State return $X, y=1$
        \EndFunction
    \end{algorithmic}
\end{algorithm}
\end{minipage}
\end{wrapfigure}

To make a learnable but not trivial class signal, we will have two positive class indicators that never co-occur in the training data. Half will have $h(\mathcal{N}(0, I_d \cdot 3)) \coloneqq c_1$ and the other half will have $h(\mathcal{N}(1, I_d \cdot 1)) \coloneqq c_1$. We remind the reader that this is a normal distribution in a $d$-dimensional space, so it is not challenging to distinguish these two classes from the background class $\mathcal{N}(0, I_d \cdot 1)$, as any one dimension with a value $\geq 3$ becomes a strong indicator of the $c_1$ class. 

Finally will have a poison class $h(\mathcal{N}(-10, I_d \cdot 0.1)) \coloneqq \varnothing$ that is easy to distinguish from all other items, and at training time always occurs in the negative classes only. If we let $\tilde{g}(\cdot)$ and $\tilde{h}(\cdot)$  represent the MIL-violating class-concept and decision function that \textit{should not be learned}. This creates an easier-to-learn signal, where $\tilde{h}(\mathcal{N}(-10, I_d \cdot 0.1)) \coloneqq \varnothing$ and the remaining spaces $\tilde{h}(\mathcal{N}(0, I_d \cdot 1)) = \tilde{h}(\mathcal{N}(0, I_d \cdot 3)) =\tilde{h}(\mathcal{N}(1, I_d \cdot 1)) \coloneqq c_1$
, with a decision function of  $\Tilde{g}(\{\varnothing, c_1\}) \coloneqq \varnothing \leq 0$. This $\Tilde{g}$ is easier to learn, but violates the MIL assumptions by looking for the absence of an item (the $\varnothing$ class) to make a prediction. It is again critical to remind the reader that the MIL learning problem is asymmetric --- we can not arbitrarily re-assign the roles of $\varnothing$ and $c_1$, and so $\Tilde{g}(\cdot) \neq g(\cdot)$ because we can not use $\varnothing$ in place of $c_1$.

The entire algorithmic unit test is summarized in Alg. \ref{alg:false_negative}, that we term the Single-Concept Standard-MIL Test. We choose this name because there is a single concept-class $c_1$ to be learned, and this test checks obedience to the most basic MIL formulation. 

Because this test is a subset of all other MIL generalizations, \textit{any algorithm that fails this test is not respecting the MIL hypothesis}. 

\begin{theorem}
\label{thrm:1}
Given an algorithm $\mathcal{A}(\cdot):  \mathcal{X} \to \mathbb{R}$, if trained on Alg. \ref{alg:false_negative} and tested on the corresponding distribution. $\mathcal{A}$ fails to respect the MIL hypothesis if the training AUC is above 0.5, and the test AUC is below 0.5. 
\end{theorem}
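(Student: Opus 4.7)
The plan is to establish the contrapositive: if $\mathcal{A}$ respects the Single-Concept Standard-MIL hypothesis, then its training-distribution and test-distribution AUCs on Alg.~\ref{alg:false_negative} must be equal, so they cannot straddle $0.5$ in opposite directions. The key observation is that the two distributions differ only in where the poison instance $p \sim \mathcal{N}(-10, I_d \cdot 0.1)$ is injected---into negative bags at training time, into positive bags at test time---and by construction $p$ belongs to the null-class $\varnothing$, exactly like the background draws from $\mathcal{N}(0, I_d \cdot 1)$. The concept-class draws in positive bags are sampled from the same mixture of $\mathcal{N}(0, I_d \cdot 3)$ and $\mathcal{N}(1, I_d \cdot 1)$ in both regimes.

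First I would formalize ``respects the MIL hypothesis'' at the score level by requiring that the real-valued $\mathcal{A}(X)$ depend only on the sub-bag of non-null instances, i.e., $\mathcal{A}(X) = \mathcal{A}(X')$ whenever $X$ and $X'$ agree after removing all instances with ground-truth $h(\cdot) = \varnothing$. This is the natural score-valued analogue of \autoref{eq:gen_mil}, since the decision function $g$ is computed over the counts $\{c_k\}$ of the non-null concepts only and null instances are by definition non-contributing. Next I would construct an explicit coupling between the training and test samplers of Alg.~\ref{alg:false_negative}: use the same draws for background and concept instances, so that the ``inject poison'' branch is the only residual source of difference between a coupled pair of bags. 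For any MIL-respecting $\mathcal{A}$, the poison injection cannot affect the score, so coupled bags receive identical scores almost surely.

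From this coupling, the induced score distribution on positive bags, and separately on negative bags, is identical under the training and test distributions. Since AUC is a functional of only these two marginal score distributions (it is the probability that a random positive-bag score exceeds a random negative-bag score), this yields $\mathrm{AUC}_{\text{train}} = \mathrm{AUC}_{\text{test}}$. The contrapositive then gives the theorem: if the two AUCs lie on opposite sides of $0.5$, $\mathcal{A}$ cannot have been MIL-respecting.

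The main obstacle is fixing the right formal notion of ``MIL-respecting'' for a score-valued, potentially stochastic algorithm, since \autoref{eq:gen_mil} is written for a binary decision $g$ rather than a continuous score, and trained models carry randomness from initialization and optimization. A natural remedy is to require the induced (population) score function to satisfy the null-instance invariance stated above and to extend the coupling to the algorithm's internal randomness, so that equality of score distributions still holds after marginalizing over training randomness. A minor subtlety is that the concept component $\mathcal{N}(0, I_d \cdot 3)$ shares support with the background $\mathcal{N}(0, I_d \cdot 1)$, but since $h$ is a latent ground-truth labeling of the generative process rather than an observable classifier output, this overlap does not impede the coupling argument.
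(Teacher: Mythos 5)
Your proposal is correct, but it takes a genuinely different route from the paper. The paper argues forward: it enumerates the rules that can separate the training bags (the MIL rule $\hat{c}_1 \geq 1 \lor \hat{c}_2 \geq 1$ versus the non-MIL rule $\varnothing_p = 0$), notes that a MIL model cannot legally use $\varnothing_p$ since it appears only in negative training bags, and concludes that the observed pattern (train AUC above $0.5$, test AUC below $0.5$) certifies that the anti-correlated poison rule was learned. You instead prove the contrapositive by an invariance/coupling argument: formalizing ``MIL-respecting'' as score invariance to null-class instances, you couple the train and test samplers so that the poison injection is the only difference, deduce that the positive- and negative-bag score marginals are identical across the two distributions, and hence that the two AUCs are equal and cannot straddle $0.5$. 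Your route is in some ways tighter --- it avoids the paper's informal ``exhaustive enumeration'' over an unspecified rule space and works for arbitrary real-valued scorers --- while the paper's version is more diagnostic, since it pins down exactly which bait rule a failing model has learned. The one caveat is your choice of formalization: requiring invariance to the ground-truth null class is stronger than the structural presence-based reading of \autoref{eq:gen_mil}, under which a model could legitimately assign the poison region to a positive concept and thereby break your invariance; this does not threaten the theorem (such a model would have training AUC at or below $0.5$, so the hypothesis of the theorem never fires, and for max-pooling scorers a monotonicity argument shows test AUC can only increase when the poison moves to positive bags), but it is worth stating explicitly that your definition is one admissible reading of the informal ``respects the MIL hypothesis.''
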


\begin{proof}
${g}(\{\varnothing, c_1\}) = c_1 \geq 1$ is the target function. Using $\varnothing_p$ to represent the background poison signal and $\varnothing_B$ to represent the indiscriminate background noise. Let $\hat{c}_1$ denote the $\mathcal{N}(0,I_d \cdot 3)$ samples and  $\hat{c}_2$ the $\mathcal{N}(1,I_d \cdot 1)$ samples. The training distribution contains negative samples ($y=-1$) of the form $\{\varnothing_p=1, \varnothing_B\}$, and positive samples ($y=1$) of the form $\{\varnothing_B \geq 1, \hat{c}_1=1\}$ and $\{\varnothing_B \geq 1, \hat{c}_2=1\}$. 

By exhaustive enumeration, only two possible logic rules can distinguish the positive and negative bags. Either the (MIL) rule $\hat{c}_1 \geq 1 \lor \hat{c}_2 \geq 1 \equiv c_1 \geq 1$ (where $c_1 \gets  \hat{c}_1 \lor \hat{c}_2$, which is allowed under \cite{Dietterich_1997}), or the non-MIL rule $\varnothing_p = 0$. However, a MIL model cannot legally learn to use $\varnothing_p$ because it occurs only in negative bags. 

Thus if the training distribution has an AUC $>0.5$ but test distribution ACU $<0.5$, it has learned the non-MIL rule and failed the test. 
\end{proof}

\subsection{Threshold-based MI Assumption and Tests}

\begin{wrapfigure}[20]{r}{0.45\textwidth}
\vspace{-50pt}
\begin{minipage}{0.45\textwidth}
\begin{algorithm}[H]%
    \caption{Multi-Concept Standard-MIL}
    \label{alg:mc_threshold}
    \begin{algorithmic}[1]
        \Function{NegativeSample}{Training}
            \If{Training}\Comment{Poison }
                \State Add $p \sim \mathcal{N}(-10, I_d \cdot 0.1)$ to bag $X$ 
            \EndIf
            \State $c \gets $ coin flip
            \If{$c$ is True}
                \State $\mathbf{x} \sim \mathcal{N}(2,I_d \cdot 0.1)$ 
            \Else
                \State $\mathbf{x} \sim \mathcal{N}(3,I_d \cdot 0.1)$
            \EndIf
            \State Add $\mathbf{x}$ to bag $X$
            \For{$b$ iterations} \Comment{$b \sim \mathcal{U}(1,10)$}
                \State Add $\mathbf{x} \sim \mathcal{N}(0,I_d \cdot 1)$ to bag $X$
           \EndFor
           \State return $X, y=-1$
        \EndFunction
        \Statex
        \Function{PositiveSample}{Training}
            \If{$\neg$Training}\Comment{Poison }
                \State Add $p \sim \mathcal{N}(-10, I_d \cdot 0.1)$ to bag $X$ 
            \EndIf
            \For{$k$ iterations}\Comment{$k \sim \mathcal{U}(1, 4)$}
                \State Add $\mathbf{x} \sim \mathcal{N}(2,I_d \cdot 0.1)$ to bag $X$
                \State Add $\mathbf{x} \sim \mathcal{N}(3,I_d \cdot 0.1)$ to bag $X$
           \EndFor
            \For{$b$ iterations} \Comment{$b \sim \mathcal{U}(1,10)$}
                \State Add $\mathbf{x} \sim \mathcal{N}(0,I_d \cdot 1)$ to bag $X$
           \EndFor
           \State return $X, y=1$
        \EndFunction
    \end{algorithmic}
\end{algorithm}
\end{minipage}
\end{wrapfigure}

We now turn to the threshold-based MIL assumption, of which the presence-based assumption is a sub-set. In this case, we now have a variable number of concept classes $K$, and we have a minimum threshold $t_k$ for the number of times a concept-class $c_k$ is observed. Then $\forall k \in [1, K] $, it must be the case that $c_k \geq t_k$ for the rule to be positive. More formally, we have $\mathcal{C} = \{\varnothing, 1, 2, \ldots, K\}$ and we define the decision function $g(\cdot)$ as:

\begin{equation}\label{eq:threshold_mild}
    g\left(\{c_1, c_2, \ldots, c_k \}\right) = \bigwedge_{k=1}^K c_k \geq t_k
\end{equation}

where $\bigwedge$ is the logical ``and'' operator indicating that all $K$ predicates must be true. It is easy to see that the Presence-based MIL is a subset by setting $t_1 = 1$ and $t_k = 0, \forall k > 1$. Thus any case that fails Alg. \ref{alg:false_negative} is not a valid Threshold MIL model, even if it passes the test we devise. We will implement two different tests that check the ability to learn a threshold-MIL model.

\subsubsection{Poisoned Test}

For our first test, we use a similar ``poison'' signal $h(\mathcal{N}(-10, I_d \cdot 0.1)) \coloneqq \varnothing$ that is easier to classify but would require violating the threshold-MIL decision function in \autoref{eq:threshold_mild}. This poison occurs perfectly in all negative bags at training time, and switches to positive bags at test time. 

For the threshold part of the assumption under test, we use a simple $K=2$ test, giving $\mathcal{C} = \{\varnothing, 1, 2\}$. The two exemplars of the classes will have no overlap this time, given by $h(\mathcal{N}(2, I_d \cdot 0.1)) \coloneqq c_1$ and $h(\mathcal{N}(3, I_d \cdot 0.1)) \coloneqq c_2$, with one item selected at random occurring in every negative bag, and both items occurring between 1 and 4 times in the positive labels. This tests that the model learns that $t_1 = t_2 = 1$. Last, generic background instances $h(\mathcal{N}(0, I_d \cdot 1)) \coloneqq \varnothing$ occur in both the positive and negative bags. The overall procedure is detailed in Alg. \ref{alg:mc_threshold}.

As with the presence test, the MIL-violating decision function $\Tilde{g}(\{\varnothing, c_1, c_2\}) = c_\varnothing \leq 0$ to indicate a positive label, which is looking for the absence of a class to make a positive label, fundamentally violating the MIL hypothesis. 

Though this test is fundamentally a similar strategy to the presented unit test, the results are significantly different, as we will show in \autoref{sec:results}. This test will help us highlight the need to produce algorithmic unit tests that capture each property we want to ensure our algorithms maintain.

\begin{theorem}\label{thrm:mc_threshold}
Given an algorithm $\mathcal{A}(\cdot):  \mathcal{X} \to \mathbb{R}$, if trained on Alg. \ref{alg:mc_threshold} and tested on the corresponding distribution. $\mathcal{A}$ fails to respect the threshold MIL hypothesis if the training AUC is above 0.5, and the test AUC is below 0.5. 
\end{theorem}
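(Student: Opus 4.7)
The plan is to adapt the exhaustive enumeration used in the proof of \autoref{thrm:1} to the two-concept threshold regime. First, I would catalogue the four sub-distributions produced by Alg.~\ref{alg:mc_threshold}. Writing $\varnothing_p$ for the $\mathcal{N}(-10, I_d \cdot 0.1)$ poison instances, $\varnothing_B$ for the $\mathcal{N}(0,I_d \cdot 1)$ background, and $\hat{c}_1, \hat{c}_2$ for the $\mathcal{N}(2, I_d \cdot 0.1)$ and $\mathcal{N}(3, I_d \cdot 0.1)$ samples, the bags take the following compact form. Training negatives contain $\{\varnothing_p=1,\ \text{exactly one of }\hat{c}_1=1\text{ or }\hat{c}_2=1,\ \varnothing_B \geq 1\}$; training positives contain $\{\varnothing_p=0,\ \hat{c}_1 \geq 1,\ \hat{c}_2 \geq 1,\ \varnothing_B \geq 1\}$; at test time only the role of $\varnothing_p$ is swapped between the two classes, while the counts of $\hat{c}_1, \hat{c}_2, \varnothing_B$ have identical conditional distributions given $y$ in both phases.

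Second, I would determine which decision rules are permitted by the threshold MIL hypothesis in \eqref{eq:threshold_mild}. Identifying $c_1 \gets \hat{c}_1$ and $c_2 \gets \hat{c}_2$, any valid rule is a conjunction of lower-bound thresholds on true-concept counts and cannot depend on the counts of the null-class items $\varnothing_p$ or $\varnothing_B$. By exhaustive enumeration of threshold vectors $(t_1, t_2)$, the only such rule that correctly separates the training bags is $\hat{c}_1 \geq 1 \land \hat{c}_2 \geq 1$, that is $t_1=t_2=1$. Crucially, this rule is invariant under the train/test swap, so an algorithm that actually learns it must attain the same AUC on both distributions and cannot exhibit the anti-correlation claimed in the theorem.

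Third, I would argue that any scoring function producing train-AUC $>0.5$ and test-AUC $<0.5$ must in fact depend on $\varnothing_p$. Because the conditional distributions of $(\hat{c}_1, \hat{c}_2, \varnothing_B)$ given $y$ are unchanged between training and testing, any component of the score that is a function of these statistics alone produces identical expected rankings in both phases and therefore the same AUC. A reversal of the AUC ordering from above to below $0.5$ can only be driven by sensitivity to $\varnothing_p$, whose label-association is the only thing that flips. Using the absence of the null-class poison $\varnothing_p$ as a positive indicator is precisely the forbidden rule $\tilde{g}(\{\varnothing, c_1, c_2\}) = \varnothing \leq 0$, which violates \eqref{eq:threshold_mild}.

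The main obstacle I expect is formalizing the ``only $\varnothing_p$ can flip'' step cleanly, rather than relying on an informal decomposition of the score. I would handle this by invoking sufficiency: a score that ignores $\varnothing_p$ is a deterministic function of $(\hat{c}_1, \hat{c}_2, \varnothing_B)$, whose joint law given $y$ matches in training and testing, so its AUC is phase-invariant; contrapositively, a phase-dependent AUC requires genuine dependence on $\varnothing_p$, and achieving a sign flip of the AUC gap requires that dependence to be anti-correlated with the MIL label --- the hallmark of a non-MIL decision rule.
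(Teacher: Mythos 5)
Your proposal is correct and follows essentially the same route as the paper's own proof: enumerate the bag forms generated by Alg.~\ref{alg:mc_threshold}, observe that only two rules separate the training bags (the threshold-MIL conjunction $c_1 \geq 1 \land c_2 \geq 1$ and the forbidden poison-absence rule $\varnothing_p = 0$), and conclude that the train/test AUC inversion can only arise from the latter since only the poison's label-association flips. Your third step merely makes explicit the phase-invariance argument the paper leaves implicit, which is a welcome but not substantively different refinement.
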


\begin{proof}
See appendix, structurally similar to proof of \autoref{thrm:1}.
\end{proof}

\subsubsection{False-Frequency Reliance}

Our last test checks for a different kind of failure. Rather than a violation of the MIL hypothesis entirely, we check that the model isn't learning a degenerate solution to the threshold-MIL model.

To do so, we will again use $K=2$ classes as before, so the decision function $g(\cdot)$ does not change with the same $t_1 = t_2 = 1$ thresholds, with the same positive instances $h(\mathcal{N}(2, I_d \cdot 0.1)) \coloneqq c_1$ and $h(\mathcal{N}(-2, I_d \cdot 0.1)) \coloneqq c_2$. The negative training bags $X$ will include one or two samples of either $c_1$ or $c_2$, not both. The positive training will contain one or two samples of each $c_1$ and $c_2$. This gives a direct example with no extraneous distractors of the target threshold-MIL model, $g(\{c_1, c_2\}) = (c_1 > t_1 ) \wedge (c_2 > t_2)$. 

\begin{wrapfigure}[22]{r}{0.45\textwidth}
\vspace{-15pt}
\begin{minipage}{0.45\textwidth}
\begin{algorithm}[H] %
    \caption{False Frequency MIL Test}
    \label{alg:false_frequency}
    \begin{algorithmic}[1]
        \Function{NegativeSample}{Training}
            \If{$\neg$Training}
                \State $t \sim \mathcal{U}(35, 40)$ 
            \Else
                \State $t \sim \mathcal{U}(1, 2)$
            \EndIf
            \State $c \gets $ coin flip
            \For{$t$ iterations}
                \If{$c$ is True}
                    \State Add $\mathbf{x} \sim \mathcal{N}(-2,I_d \cdot 0.1)$  to bag $X$
                \Else
                    \State Add $\mathbf{x} \sim \mathcal{N}(2,I_d \cdot 0.1)$ to bag $X$
                \EndIf
            \EndFor
            \For{$b$ iterations} \Comment{$b \sim \mathcal{U}(1,10)$}
                \State Add $\mathbf{x} \sim \mathcal{N}(0,I_d \cdot 1)$ to bag $X$
           \EndFor
           \State return $X, y=-1$
        \EndFunction
        \Statex
        \Function{PositiveSample}{Training}
            \For{$t \sim \mathcal{U}(1,2)$ iterations}
                \State Add $\mathbf{x} \sim \mathcal{N}(-2,I_d \cdot 0.1)$  to bag $X$
            \EndFor
            \For{$t \sim \mathcal{U}(1,2)$ iterations}
                \State Add $\mathbf{x} \sim \mathcal{N}(2,I_d \cdot 0.1)$  to bag $X$
            \EndFor
            \For{$b$ iterations} \Comment{$b \sim \mathcal{U}(1,10)$}
                \State Add $\mathbf{x} \sim \mathcal{N}(0,I_d \cdot 1)$ to bag $X$
           \EndFor
           \State return $X, y=1$
        \EndFunction
    \end{algorithmic}
\end{algorithm}
\end{minipage}
\end{wrapfigure}

However, it is possible for a model that is not well aligned with the MIL model to learn a degenerate solution $\Tilde{h}$ that maps $\Tilde{h}(\mathcal{N}(2, I_d \cdot 0.1)) \coloneqq c_1$ and $\Tilde{h}(\mathcal{N}(-2, I_d \cdot 0.1)) \coloneqq c_1$, and thus learns an erroneous $\Tilde{g}(\{c_1, c_2\}) \coloneqq c_1 \geq \Tilde{t}_1$. While this solution does respect the overall MIL hypothesis, it indicates a failure of the model to recognize two distinct concept classes $c_1$ and $c_2$, and thus does not fully satisfy the space of threshold-MIL solutions. 

\begin{theorem} \label{thm:false_frequency}
Given an algorithm $\mathcal{A}(\cdot):  \mathcal{X} \to \mathbb{R}$, if trained on Alg. \ref{alg:false_frequency} and tested on the corresponding distribution. $\mathcal{A}$ fails to respect the threshold MIL hypothesis if the training AUC is above 0.5, and the test AUC is below 0.5. 
\end{theorem}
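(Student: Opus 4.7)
The plan is to mirror the structure of the proof of \autoref{thrm:1}: enumerate the decision rules that are consistent with the training distribution, identify which of them is the correct threshold-MIL rule, and show that the degenerate alternative produces an AUC that inverts on the test distribution. The key observation is that the training and test distributions differ only in the magnitude of a bag-level count for negative bags (from $[1,2]$ to $[35,40]$), while positive bags are drawn from the same distribution in both regimes. So any rule that is invariant to this count behaves identically on the two distributions, whereas any rule that is sensitive to it is forced to flip.

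First I would set up notation. Let $\hat{c}_1$ denote the count of $\mathcal{N}(2, I_d\cdot 0.1)$ samples in a bag and $\hat{c}_2$ denote the count of $\mathcal{N}(-2, I_d\cdot 0.1)$ samples; the remaining mass is background noise $\varnothing_B \sim \mathcal{N}(0, I_d)$, which is i.i.d.\ in both classes and both regimes and so cannot carry signal. In training, negative bags satisfy $(\hat{c}_1 \in [1,2], \hat{c}_2 = 0)$ or $(\hat{c}_1 = 0, \hat{c}_2 \in [1,2])$, while positive bags satisfy $\hat{c}_1 \in [1,2]\wedge \hat{c}_2 \in [1,2]$. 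In test, positive bags are unchanged but negative bags have $(\hat{c}_1 \in [35,40], \hat{c}_2 = 0)$ or $(\hat{c}_1 = 0, \hat{c}_2 \in [35,40])$.

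Next I would carry out the exhaustive enumeration of rules that separate positive from negative bags in training. Up to logical equivalence on the training support, the separating rules fall into two families: (i) the threshold-MIL rule $g(\{c_1,c_2\}) = (\hat{c}_1 \geq 1)\wedge(\hat{c}_2 \geq 1)$, which is the intended target, and (ii) degenerate rules $\tilde{g}$ based on the total count $\hat{c}_1 + \hat{c}_2$ (or asymmetrically on $\max(\hat{c}_1,\hat{c}_2)$), of the form $\hat{c}_1 + \hat{c}_2 \geq \tilde{t}$ for some threshold $\tilde{t}\in\{3,4\}$, which collapse $c_1$ and $c_2$ into a single effective concept. No other rule distinguishes the two training distributions, because the marginal distributions of $\varnothing_B$ are identical across classes and the signal-bearing instances lie only on the two well-separated clusters centered at $\pm 2$.

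Finally I would verify that family (i) assigns identical labels on the test distribution (since it depends only on presence, not magnitude of counts, and positive bags are unchanged), so it yields test AUC $>0.5$; whereas every rule in family (ii) is triggered by the inflated negative test bags (with $\hat{c}_1 + \hat{c}_2 \geq 35$), which overwhelmingly exceed the positive bags' total count of at most $4$, and therefore predicts negative test bags as more positive than positive test bags, giving AUC $<0.5$. Combining these, if the algorithm achieves training AUC $>0.5$ it has learned a rule from family (i) or (ii), and if simultaneously test AUC $<0.5$ it must have learned from family (ii), i.e., the degenerate collapsed-concept solution, which fails the threshold-MIL hypothesis. The main obstacle is justifying that the enumeration of rule families is exhaustive up to equivalence on the training support; I would handle this by noting the information-theoretic point that $\varnothing_B$ is uninformative and the $c_1,c_2$ clusters are linearly separable so any learnable statistic factors through $(\hat{c}_1,\hat{c}_2)$, reducing the problem to a combinatorial case analysis on two small integers.
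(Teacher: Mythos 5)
Your proposal takes essentially the same route as the paper's proof: enumerate the decision rules consistent with the training bags (the intended conjunction $c_1 \geq 1 \land c_2 \geq 1$ versus the degenerate count-based rule of the form $c_1 + c_2 \geq \tilde{t}$), then note that inflating the negative-bag counts to $[35,40]$ at test time flips only the count-based rule, so training AUC above $0.5$ combined with test AUC below $0.5$ identifies the degenerate non-threshold-MIL solution. The only slight imprecision is your claim that the count rules with $\tilde{t}\in\{3,4\}$ separate the training classes --- as the paper notes, they achieve only non-random but imperfect accuracy (positives with one instance of each concept have total count $2$) --- but this does not affect the argument, since the theorem only requires training AUC above $0.5$.
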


\begin{proof}
    See appendix, structurally similar to the proof of \autoref{thrm:1}.
\end{proof}

\section{Results} \label{sec:results}

We will now review the results of our three unit tests across both deep-MIL models and prior SVM based MIL algorithms. In every deep learning case, we generate 100,000 training bags with 10,000 test bags. Each model was trained for 20 epochs. Each network was trained using the Adam optimizer using three layers of the given deep model type. We found this was sufficient for each model type to nearly perfectly learn the training set, with the exception of the Hopfield network that struggled to learn under all tests even in extended testing with varying layers and model sizes.

For the SVM models the $\bigO(N^3)$ training complexity limited the training size. MissSVM and MICA were trained on only 200 samples because larger sizes took over a day. All others were trained on 1,000 samples. A test set of 10,000 was still used. For each SVM model, we use a Radial Basis Function (RBF) kernel $K\left(\mathbf{x}, \mathbf{x}^{\prime}\right)=\exp \left(-\gamma\left\|\mathbf{x}-\mathbf{x}^{\prime}\right\|^2\right)$, where $\gamma$ was set to be 0.100 in all tests. This value was found by running each algorithm on a sample of $N=50$ training bags across each of the training sets, to find a single value of $\gamma$ from $10^{-4}$ to $10^3$ that worked across all SVM models and tests. This was done because the SVM results took hours to run, and obtaining the best possible accuracy is not a goal. The point of our tests is to identify algorithms that appear to learn (high training numbers) but learn the wrong solution ($<$ 0.5 test AUC). For this reason, a simple and fast way to run the algorithms was more important and equally informative. 

In our experiments, the only models known and designed to conform to the standard Presence MIL assumption are mi-Net, mi-SVM, MI-SVM, and MissSVM. For this reason, we expect these models to pass the first test of Alg. \ref{alg:false_negative}. 
We note that none of the models being tested was designed for the Threshold MI assumptions that comprise the second two tests. Still, we will show how the results on the Threshold tests are informative to the nature of the model being investigated. We remind the reader that each unit test can be solved perfectly by a model respecting the appropriate MIL assumptions.

\subsection{Presence Test Results}

Our initial results are in \autoref{tbl:false_negative}, showing the training and testing accuracy and AUC for each algorithm against the unit test described by Alg. \ref{alg:false_negative}. All deep-MIL models introduced after mi-Net \cite{zhou2002neural} and tested here have failed the test, with the exception of \cite{NEURIPS2022_e261e92e}. This makes the increased accuracy/improvement 
\begin{wraptable}[24]{r}{0.45\textwidth}
\centering
\caption{Results for the standard MIL assumption test Alg. \ref{alg:false_negative}. Any algorithm that fails this test (testing AUC $<$ 0.5) is fundamentally invalid as a MIL algorithm under all circumstances, and should not be used in cases where the MIL assumptions are important. \textit{Failing algorithms are shown in italics}. } \label{tbl:false_negative}
\adjustbox{max width=0.45\textwidth}{%
\begin{tabular}{@{}lllll@{}}
\toprule
\multicolumn{1}{c}{}          & \multicolumn{2}{c}{Training}                           & \multicolumn{2}{c}{Testing}                            \\ \cmidrule(lr){2-3} \cmidrule(lr){4-5}
\multicolumn{1}{c}{Algorithm} & \multicolumn{1}{c}{Acc.} & \multicolumn{1}{c}{AUC} & \multicolumn{1}{c}{Acc.} & \multicolumn{1}{c}{AUC} \\ \midrule
mi-Net                        & 0.991                        & 0.998                   & 0.993                        & 1.000                   \\
\textit{MI-Net}               & \textit{1.000}               & \textit{1.000}          & \textit{0.000}               & \textit{0.000}                   \\
\textit{MIL-Pooling}          & \textit{1.000}               & \textit{1.000}          & \textit{0.000}               & \textit{0.000}                   \\
\textit{Tran-MIL}             & \textit{1.000}               & \textit{1.000}          & \textit{0.000}               & \textit{0.000}                   \\
\textit{GNN-MIL}              & \textit{1.000}               & \textit{1.000}          & \textit{0.000}               & \textit{0.000}                   \\
CausalMIL                     & 0.999                        & 0.999                   & 0.996                        & 1.000                   \\
\textit{Hopfield}             & \textit{0.624}               & \textit{0.495}          & \textit{0.500}               & \textit{0.488}                   \\ \cmidrule(l){2-5} 
mi-SVM                        & 0.999                        & 1.000                   & 0.935                        & 1.000                   \\
MI-SVM                        & 1.000                        & 1.000                   & 0.986                        & 1.000                   \\
SIL                           & 0.992                        & 1.000                   & 0.766                        & 0.998                   \\
\textit{NSK}                  & \textit{1.000}               & \textit{1.000}          & \textit{0.000}               & \textit{0.000}                   \\
\textit{STK}                  & \textit{1.000}               & \textit{1.000}          & \textit{0.466}               & \textit{0.000}                   \\
MICA                          & 0.500                        & 1.000                   & 0.500                        & 1.000                   \\
MissSVM                       & 0.995                        & 1.000                   & 0.449                        & 0.551                   \\ \bottomrule
\end{tabular}
}
\end{wraptable}
on MIL problems of many prior work suspect. This is because any test could be learning to check for the absence of a feature, a violation of the MIL assumption that Alg. \ref{alg:false_negative} tests, and thus learning the kinds of relationships that are explicitly forbidden by the hypothesis space.

The results of the older SVM literature are interesting. As noted by \citet{Foulds_2010}, the NSK and STK models are not actually MIL-respecting, and thus fail the test. However, the SIL model was explicitly designed to ignore the MIL assumption, yet still passes this test. The MICA algorithm, while not designed to ignore MIL explicitly is not designed to enforce it either, so it also passes the test. While the MIL respecting MissSVM passes but only marginally. 

\begin{wraptable}[21]{r}{0.45\textwidth}
\vspace{-10pt}
\centering
\caption{Results for the Threshold MIL assumption test Alg. \ref{alg:mc_threshold}. Any algorithm that fails this test (testing AUC $<$ 0.5) learns the invalid relationship that the absence of an instance indicates a positive label.  \textit{Failing algorithms are shown in italics}. }
\label{tbl:mc_threshold}
\adjustbox{max width=0.45\textwidth}{%
\begin{tabular}{@{}lllll@{}}
\toprule
                              & \multicolumn{2}{c}{Training}                                   & \multicolumn{2}{c}{Testing}                                    \\ \cmidrule(lr){2-3} \cmidrule(lr){4-5}
\multicolumn{1}{c}{Algorithm} & \multicolumn{1}{c}{Acc.} & \multicolumn{1}{c}{AUC} & \multicolumn{1}{c}{Acc.} & \multicolumn{1}{c}{AUC} \\ \midrule
\textit{mi-Net}               & \textit{0.735}               & \textit{0.999}          & \textit{0.500}               & \textit{0.000}                   \\
MI-Net                        & 0.991                        & 0.807                   & 0.000                        & 0.827                   \\
MIL-Pooling                   & 0.999                        & 1.000                   & 1.000                        & 1.000                   \\
\textit{Tran-MIL}             & \textit{0.955}               & \textit{0.949}          & \textit{0.500}               & \textit{0.000}                   \\
GNN-MIL                       & 0.978                        & 0.997                   & 0.624                        & 0.678                   \\
CausalMIL                     & 0.717                        & 0.745                   & 0.500                        & 0.500                   \\
\textit{Hopfield}             & \textit{0.624}               & \textit{0.540}          & \textit{0.500}               & \textit{0.503}                   \\ \cmidrule(l){2-5} 
mi-SVM                        & 0.500                        & 0.857                   & 0.500                        & 0.818                \\
MI-SVM                        & 0.759                        & 0.887                   & 0.727                        & 0.828                  \\
SIL                           & 0.500                        & 0.861                   & 0.500                        & 0.732                   \\
NSK                           & 1.000                        & 0.889                   & 0.889                        & 0.966                   \\
\textit{STK}                  & \textit{0.947}               & \textit{0.991}          & \textit{0.000}               & \textit{0.000}                   \\
\textit{MICA}                 & \textit{0.500}               & \textit{0.998}          & \textit{0.500}               & \textit{0.490}                   \\
MissSVM                       & 0.640                        & 0.943                   & 0.499                        & 0.763                   \\ \bottomrule
\end{tabular}
}
\end{wraptable}

We find these results informative and instructive. They demonstrate that \textit{algorithmic unit tests are not certificates of correctness}. Rather, failure of these tests is a certificate of an errant algorithm, but may produce false positives. While the design of a more powerful test is beyond the scope of this article, the work presented here provides practical caveats for the use of such tests in future studies. Any future MIL paper can use the tests and provide results to the reader to help boost confidence, but the test should not itself be used as a means of proving the correctness. 

Of note, CausalMIL is the only recent deep-MIL model we evaluated which is designed to respect the \textit{standard} MIL assumption, and passes the test accordingly. While CausalMIL was not designed for the threshold MIL, it still passes the next two tests - but with a marginal AUC, near 0.5. This is reasonable since it is testing a scenario beyond CausalMIL's design. Indeed it would be acceptable even if CausalMIL failed the next tests, because they are beyond its scope (which happens to mi-Net). The goal is that models are tested to the properties they purport to have. 

\subsection{Threshold Results}

Our next two unit tests cover two different aspects of the Threshold MIL assumption: 1) that they can learn to require two concepts to denote a positive class, and 2) that they do not degrade to relying on frequency (i.e., perform the desired counting behavior of each class). Any algorithm that passes either of these tests, but fails the Presence test, is still an invalid MIL algorithm by both the Presence and Threshold models because the Presence MIL model is a subset of the Threshold model. 

The results of our first test of Alg. \ref{alg:mc_threshold} on learning two concepts are shown in \autoref{tbl:mc_threshold}, where only the MIL-Pooling model learns the completely correct solution. This test is most valuable in showing how mi-Net, which is a valid Presence MIL model, is not a valid Threshold MIL model, reaching an AUC of 0. 

One may wonder why the mi-Net performs poorly, while the mi-SVM and MI-SVM pass the test with peculiar results. In the case of the mi-SVM, its label propagation step means that instance $ \sim \mathcal{N}(2,I_d \cdot 0.1)$ and instance $ \sim \mathcal{N}(3,I_d \cdot 0.1)$ will receive inferred negative labels (from negative bags), and positive labels (from positive bags). There are proportionally more $\sim \mathcal{N}(3,I_d \cdot 0.1)$ samples with positive labels, though, and each positive bag, by having more samples, can select the most-extreme data point (largest positive values in each coordinate) to infer that the positive bags are ``more positive'' than a negative bag. This results in a non-trivial AUC of 82\%. In the mi-SVM case, the 50\% accuracy remains because the overlapping and conflicting labels cause the optimization of the slack terms $\xi$ to become degenerate. Because the MI-SVM does not result in conflicted labels by using the ``witness'' strategy, it instead can respond to the most maximal item in a bag learning to key off of the most right-tail extreme values of $\sim \mathcal{N}(3,I_d \cdot 0.1)$ to indicate a positive label, because the positive bags are more likely to have such extreme values by having more samples, and avoiding the conflicting label problem of mi-SVM. 

By contrast, the mi-Net model fails due to the increased flexibility of the neural network to learn a more complex decision surface, ``slicing'' the different maximal values to over-fit onto the training data, resulting in degenerate performance. Note that mi-Net's results do not change with the removal of the poisoned item at test time, as otherwise, its accuracy would degrade to zero. The MI-Net instead suffers from this problem, and by using the poison token ironically learns a less over-fit solution, allowing it to obtain a non-trivial AUC.

\begin{wraptable}[23]{r}{0.45\textwidth}
\vspace{-20pt}
\centering
\caption{Results for the Treshold MIL assumption test Alg. \ref{alg:false_frequency}. Any algorithm that fails this test (testing AUC $<$ 0.5) is not able to learn that two concepts are required to make a positive bag.  \textit{Failing algorithms are shown in italics}.}
\label{tbl:false_frequency}
\adjustbox{max width=0.45\textwidth}{%
\begin{tabular}{@{}lllll@{}}
\toprule
\multicolumn{1}{c}{}          & \multicolumn{2}{c}{Training}                           & \multicolumn{2}{c}{Testing}                            \\ \cmidrule(lr){2-3} \cmidrule(lr){4-5}
\multicolumn{1}{c}{Algorithm} & \multicolumn{1}{c}{Acc.} & \multicolumn{1}{c}{AUC} & \multicolumn{1}{c}{Acc.} & \multicolumn{1}{c}{AUC} \\ \midrule
\textit{mi-Net}               & \textit{0.689}               & \textit{0.744}          & \textit{0.740}               & \textit{0.496}                  \\
MI-Net                        & 0.957                        & 0.992                   & 0.500                        & 1.000                   \\
\textit{MIL-Pooling}          & \textit{0.997}               & \textit{0.999}          & \textit{0.500}               & \textit{0.477}                \\
Tran-MIL                      & 0.989                        & 0.998                   & 0.994                        & 1.000                  \\
\textit{GNN-MIL}              & \textit{0.965}               & \textit{0.995}          & \textit{0.475}               & \textit{0.000}                   \\
CausalMIL                     & 0.688                        & 0.752                   & 0.496                        & 0.602                  \\
Hopfield                      & 0.625                        & 0.493                   & 0.500                        & 0.515                   \\ \cmidrule(l){2-5} 
\textit{mi-SVM}               & \textit{0.500}               & \textit{0.738}          & \textit{0.500}               & \textit{0.054}                   \\
MI-SVM                        & 0.770                        & 0.875                   & 0.511                        & 0.518                   \\
\textit{SIL}                  & \textit{0.500}               & \textit{0.778}          & \textit{0.500}               & \textit{0.180}                   \\
\textit{NSK}                  & \textit{1.000}               & \textit{1.000}          & \textit{0.500}               & \textit{0.000}                   \\
STK                           & 1.000                        & 1.000                   & 0.996                        & 1.000                   \\
\textit{MICA}                 & \textit{0.985}               & \textit{0.999}          & \textit{0.482}               & \textit{0.481}                   \\
\textit{MissSVM}              & \textit{0.785}               & \textit{0.935}          & \textit{0.327}               & \textit{0.093}                   \\ \bottomrule

\end{tabular}
}
\end{wraptable}

The discussion on why mi-SVM and MI-SVM are able to pass the Alg. \ref{alg:mc_threshold} test is similarly instructive as to why they perform worse on the Alg. \ref{alg:false_frequency} test as shown in \autoref{tbl:false_frequency}. This test checks that the models do not learn to ``cheat'' by responding to the magnitude of the values or the frequency of a specific concept class occurrence. Because the frequency of concept classes changes from train-to-test, \{mi, MI\}-SVMs learn to over-focus on the magnitude of coordinate features to indicate a positive direction, which inverts at test time. Thus the performance of both methods drops significantly, and the mi-SVM ends up failing the test. 

We also note that between the two Threshold tests, we see different algorithms pass/fail each test. MIL-Pooling, Tran-MIL and STK, and NSK have dramatic changes in behavior from test to test. By developing unit tests that exercise specific desired properties, we are able to immediately elucidate how these algorithms fail to satisfy the Threshold-MIL assumption. Because Tran-MIL and STK pass \autoref{alg:false_frequency} but fail \autoref{alg:mc_threshold}, we can infer that both Tran-MIL and STK are able to successfully learn the concept that ``two concepts are required to occur'' property, but are also able to learn to detect the absence of an instance as a positive indicator, and so fail the test. 

\section{Conclusion} \label{sec:conclusion}

Our article has proposed the development of algorithmic unit tests, which are synthetic training and testing sets that exercise a specific learning criterion/property of an algorithm being tested. By developing three such unit tests for the Multiple Instance Learning problem, we have demonstrated that only one post-2016 deep-MIL algorithm that we tested, CausalMIL, appears to actually qualify as a MIL model. We conclude that this is because the algorithms were designed without verifying that the MIL assumptions were respected.  
 
\section*{Acknowledgements}

We would like to thank the reviewers of this work for their valuable feedback that has improved this paper. We note that some formatting changes were attempted but we could not make them a readable font and within the page limits, and so we apologize that the formatting requests could not be fully satisfied. 

\bibliographystyle{IEEEtranN}
\bibliography{cleanRef}

\newpage
\appendix
\section{Proofs}

Proof of \autoref{thrm:mc_threshold}:

\begin{proof}
${g}(\{\varnothing, c_1, c_2\}) = c_1 \geq 1 \land c_2 \geq 1$ is the target function. Using $\varnothing_p$ to represent the background poison signal and $\varnothing_B$ to represent the indiscriminate background noise, The training distribution contains negative samples ($y=-1$) of the form $\{\varnothing_p=1, \varnothing_B \geq 1, c_1=1\}$ and $\{\varnothing_p=1,\varnothing_B \geq 1, c_2 =1\}$, and positive samples ($y=1$) of the form $\{\varnothing_B \geq 1, c_1=1, c_2 =1\}$. 

By exhaustive enumeration, only two possible logic rules can distinguish the positive and negative bags. Either the (MIL) rule $c_1 \geq 1 \land c_2 \geq 1$, and the non-MIL rule $\varnothing_p = 0$. However, a MIL model cannot respect the MIL hypothesis and learn to use $\varnothing_p$ simultaneously, because $\varnothing_p$ occurs only in negative bags. 

By changing the test distribution to evaluate the sample ${\varnothing_B=1, c_1=1, c_2=1}$ and observing the model produce the negative label $y=-1$, the only possible conclusion is it has learned the non-MIL hypothesis. 
\end{proof}

Proof of \autoref{thm:false_frequency}:

\begin{proof}
${g}(\{\varnothing, c_1, c_2\}) = c_1 \geq 1 \land c_2 \geq 1$ is the target function. Using $\varnothing_B$ to represent the indiscriminate background noise, The training distribution contains negative samples ($y=-1$) of the form 
$\{\varnothing_B \in  [1, 10], c_1\in[1,2]\}$ and 
$\{\varnothing_B \in  [1, 10], c_2 \in [1, 2] \}$, 
and positive samples ($y=1$) of the form 
$\{\varnothing_B \in [1, 10], c_1  \in [1, 2], c_2  \in [1, 2]\}$. 

By exhaustive enumeration, only two possible logic rules can distinguish the positive and negative bags: $c_1 \geq 1 \land c_2 \geq 1$. However, there is a naive MIL rule that can obtain non-random, but not perfect accuracy, $c_1+c_2 \geq 3$. 

By changing the test distribution to evaluate the samples ${\varnothing_B=1, c_1 \geq 35}$ and ${\varnothing_B=1, c_2 \geq 35}$ and observing the model produce the positive label $y=1$, the only possible conclusion is it has learned the non-threshold MIL hypothesis. 

\end{proof}

\end{document}